\newtheorem{proposition}{Proposition}
\def\tsc#1{\csdef{#1}{\textsc{\lowercase{#1}}\xspace}}
\begin{document}
\let\WriteBookmarks\relax
\def\floatpagepagefraction{1}
\def\textpagefraction{.001}
\definecolor{mygray}{gray}{0.6}

% Short title
\shorttitle{Enforcing class separability in metric learning}

% Short author
\shortauthors{M. Méndez-Ruiz et~al.}

% Main title of the paper
\title [mode = title]{\textbf{\textcolor{mygray}{SuSana Distancia is all you need:}} \\ Enforcing class separability in metric learning via two novel \\ distance-based loss functions for few-shot image classification}        

% Authors and addresses. Please put your ORCIDs after the corresponding author names.

\author[addr1]{Mauricio Mendez-Ruiz}

\author[addr2]{Jorge Gonzalez-Zapata}

\author[addr2]{Ivan Reyes-Amezcua}

\author[addr1]{Daniel Flores-Araiza}

\author[addr1]{Francisco Lopez-Tiro}

\address[addr1]{Tecnologico de Monterrey, School of Engineering and Sciences, Jalisco, 45138, Mexico}

\author[addr2]{Andres Mendez-Vazquez}\cormark[1]

\address[addr2]{Centro de Investigación y de Estudios Avanzados, Computer Sciences Department, Guadalajara, Mexico}
\cortext[cor]{C. author: gilberto.ochoa@tec.mx, andres.mendez@cinvestav.mx}

\author[addr1]{Gilberto Ochoa-Ruiz}

%\cortext[cur]{Current Affiliation: Viettel Cyberspace Defense Center, Vietnam}

% Here goes the abstract
\begin{abstract}
Few-shot learning is a challenging area of research that aims to learn new concepts with only a few labeled samples of data. Recent works based on metric-learning approaches leverage the meta-learning  approach,  which is encompassed by episodic tasks that make use a support (training) and query set (test) with the objective of learning a similarity comparison metric between those sets. Due to the lack of data, the learning process of the embedding network becomes an important part of the few-shot task.
In this work, we propose two different loss functions which consider the importance of the embedding vectors by looking at the intra-class and inter-class distance between the few data. The first loss function is the Proto-Triplet Loss, which is based on the original triplet loss with the modifications needed to better work on few-shot scenarios. The second loss function, which we dub ICNN loss is based on an inter and intra class nearest neighbors score, which help us to assess the quality of embeddings obtained from the trained network. Our results, obtained from a extensive experimental setup show a significant improvement in accuracy in the miniImagenNet benchmark compared to other metric-based few-shot learning methods by a margin of $2\%$, demonstrating the capability of these loss functions to allow the network to generalize better to previously unseen classes. In our experiments, we demonstrate competitive generalization capabilities to other domains, such as the Caltech CUB, Dogs and Cars datasets compared with the state of the art.
\end{abstract}

% Use if graphical abstract is present
% \begin{graphicalabstract}
% \includegraphics{figs/grabs.pdf}
% \end{graphicalabstract}

% Keywords
% Each keyword is seperated by \sep
\begin{keywords}
Few-Shot Learning \sep Computer Vision \sep Deep Learning \sep Characterization \sep Image Generation 
\end{keywords}

\maketitle

\section{Introduction}

Despite recent advances in deep learning research in various computer vision and NLP tasks \cite{torfi2020natural}, it remains a challenge for the standard supervised learning setting to achieve satisfactory results when learning from just a small amount of labeled data \cite{song2022comprehensive}. Current deep learning algorithms tend to overfit when they are given a small dataset for training, limiting their generalization capabilities \cite{kawaguchi2017generalization}. Moreover, there are many problem domains where obtaining labeled data can be difficult to obtain or it entails a lot of manual work to get the data with the corresponding annotations or ground truth. This represents a major problem in many real-world applications (i.e.,  medicine) as some of the instances we are interested in are very rare \cite{piccialli2021survey}.

Therefore in recent years the Few-shot learning (FSL) paradigm has been proposed (\cite{siamesenetworks, prototypicalNets2017, matchingNetworks2016, sung2017relationNet, maml, reptile}) as a way to deal with this data scarcity problem. For classification tasks, the main goal of such methods is to categorize previously unseen data into a set of new classes, given only a small amount of labeled instances per class. The main challenge for FSL is to apply a fine-tuning process to an existing embedding network to adapt it to new classes; as a matter of fact, the main issue is that such process can easily lead to overfitting, due to the few labeled samples available for each class. 

%%%%%% - Schematic or diagram of our model

\begin{figure*}
\centering
\includegraphics[width=.7\textwidth]{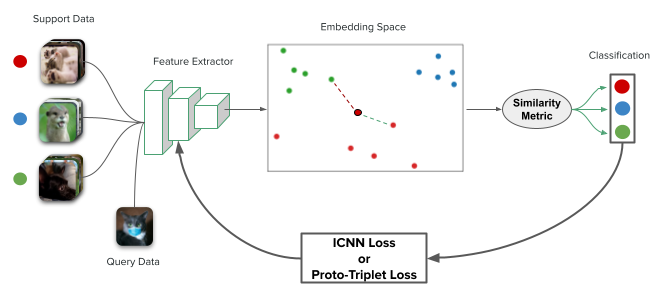}
\caption{In our work we propose to use two loss functions that work well for metric meta-learning approaches for few-shot classification. We optimize an embedding network based on the error calculated by the ICNN Loss or the proto-triplet loss. Both losses aim to increase the inter-class distance and decrease the intra-class distance between samples of different classes.} \label{fig:model}
\end{figure*}

\begin{figure}
\centering
\begin{subfigure}{.47\textwidth}
  \centering
  \includegraphics[width=.75\textwidth]{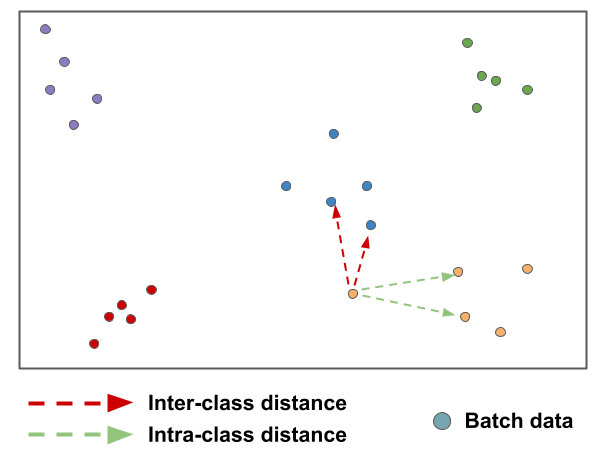}
  \caption{The ICNN Loss is based on a score given by the intra and inter class distance from the batch points. For each data point, we measure its Inter and Intra Class Nearest Neighbors score and optimize the embedding network based on the quality of the generated features.} 
  \label{fig:icnn_loss}
\end{subfigure}%
\hfill
\begin{subfigure}{.47\textwidth}
  \centering
  \includegraphics[width=.70\textwidth]{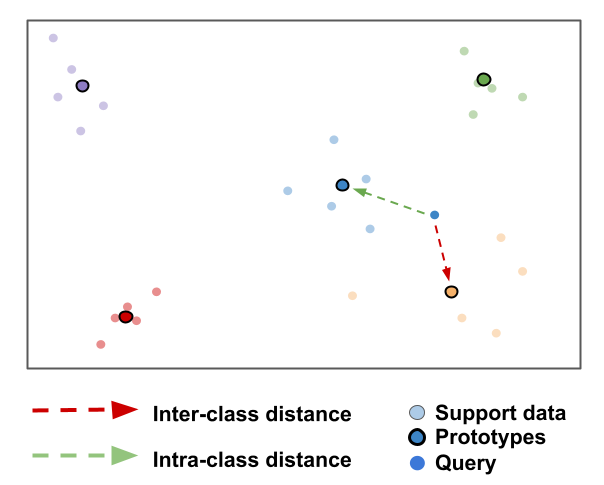}
  \caption{The Proto-Triplet  loss takes as anchor a query point, the prototype from the same class as positive point and the nearest prototype of different class as the negative point.} 
  \label{fig:proto-triplets_loss}
\end{subfigure}
\caption{ Comparison between the two distances.}
\label{fig:test}
\end{figure}

Currently, two main approaches to FSL exist: the first one is based on meta-learning methods (\cite{maml, learningtolearn, learningtooptimize, l2lgradient}), where the basic idea is to learn from diverse tasks and datasets and adapt the learned model to new datasets. A second approach is based on metric-learning methods (\cite{distanceMetricLearning, siamesenetworks}), where the objective is to learn a pairwise similarity metric such that a score (given by some distance) is high for similar samples, whilst dissimilar samples get a low score. Subsequently, these metric learning methods undertook a hybrid approach, as they started to adopt the meta-learning policy to learn across tasks (\cite{prototypicalNets2017, matchingNetworks2016, sung2017relationNet}). The main objective of these methods is to learn an effective embedding network in order to extract useful features from a given task and discriminate on the classes which we are trying to predict. From this basic learning setting, many extensions have been introduced to improve the performance of metric learning methods. Some of these works focus on pre-training the embedding network (\cite{ssl}), others introduce task attention modules (\cite{chen2020multiscale, categoryTraversal, principalCharacteristics}), whereas others seek to optimize the embeddings (\cite{convexoptimization}) and yet others employ a variety of loss functions (\cite{principalCharacteristics}). 
%\textcolor{blue}{
However, only a few methods have explored mechanisms for enforcing class separability via a custom loss function during training.%}

Our approach introduced herein can be seen as a hybrid between some of the above-mentioned approaches, as we attempt to improve jointly the embeddings and we also investigate the impact of two loss functions for FSL-based classification tasks.

More specifically, we explore the use of two different loss functions based on the concepts of inter-class and intra-class distance. The first one is the well-known proto-triplet loss, which to the best of our knowledge has not been explored in the context of few shot learning; the second, which we dubbed ICNN loss (based on the Inter and Intra Class Nearest Neighbor score first introduced in  \cite{tesis_ivan}) represents a novel loss function that evaluates the quality of the features obtained based on the inter/intra class, the variance and the class ratio from nearest neighbors.

The basic working principle of our proposed approach is shown in Figure \ref{fig:model}. As it can be observed in the figure, the used loss functions allow us to jointly optimize the embedding network and learn more discriminative features across tasks. In our experiments, we obtained an accuracy performance of 61.32\% and 79.93\% in the 5-way 1-shot and 5-way 5-shot settings respectively using the proto-triplet loss, and 60.79\% and 80.41\% using our novel ICNN Loss on the MiniImagenet dataset which represents an improvement of about 2\% compared to the state of the art. We further test our methods on the CUB, Caltech, Stanford Dogs and Stanford Cars datasets to assess the generalization capabilities of the proposed framework, obtaining satisfactory results.

The ICNN Loss and Proto-Triplet Loss are two methods used for optimizing an embedding network based on the quality of the generated features. As shown in Figure \ref{fig:test}, the ICNN Loss measures the intra and inter class distances of batch points and assigns each data point an Inter and Intra Class Nearest Neighbors score. The Proto-Triplet Loss, on the other hand, is based on triplet loss and calculates the similarity between points using prototypes of different classes as anchor, positive, and negative points.

%%% El paper esta organizado de la siguiente manera
The rest of the paper is organized as follows: In Section 2, we describe the related work in the few-shot learning problem. In Section 3 we describe the proposed loss functions. We first explain the Proto-triplet loss function, and then we explain the rationale for the ICNN Loss function and its derivation. Then we discuss some of the design choices we needed to take for the ICNN loss. In Section 4 we detail the experimental setup used for the implementation of the models. In Section 5 we show the results obtained and discuss about its performance. Finally, in Section 6 we present our conclusions and discuss future work.

\section{Related Work and Motivation}

\subsection{Deep  Metric Learning}

The goal of metric learning is to learn a similarity function from the data. More specifically, it aims to learn feature embeddings in a way that reduces the distance between embeddings corresponding to instances of the same class (intra-class distance) and increases the distance between embeddings corresponding to instances of different classes (inter-class distance). The vast majority of  deep metric learning (DML) approaches make use of an embedding network to learn the discriminating features that will be exploited for computing the similarity metric. Below, we review some of the more relevant deep metric learning methods in the recent literature.

One of the fundamental methods in DML is the Siamese Networks (\cite{siamesenetworks}), which is a symmetric neural network architecture that consists of two sub-networks, both sharing the same parameters. These networks learn their parameters by calculating a distance metric between the feature embeddings of each sub-network,  which are fed a different input. The loss function used in Siamese Networks is the contrastive loss or pairwise ranking loss, which seeks for the distance of samples from the same class to be small and from different classes to be large.

The second important metric learning method is the Triplet Network (\cite{triplets}), whose underlying architecture can also be described by a symmetric neural network. However, this method makes use of three identical sub-networks sharing the same parameters. The input of the three sub-networks is encompassed by three different images: The first one is the anchor (the baseline image), the second is the positive sample (an instance that belongs to the same class as the anchor), and the third is the negative sample (an instance that belongs to a different class than the anchor). This network uses the triplet loss to learn discriminative feature embeddings, and it works by ensuring that the anchor image is close to the positive images and far away from the negative images.

These metric learning methods have been widely used for different purposes, such as image retrieval (\cite{wang2014learning}), face recognition (\cite{triplets, taigman2014deepface, ddml_face}), person re-identification (\cite{xiao2017joint}), video surveillance (\cite{HUANG2018104videosurvilliance}), three-dimensional modelling (\cite{dai2017deep}), signature verification (\cite{signature}), medical image analysis (\cite{annarumma2017deep}), text understanding (\cite{Mueller2016SiameseRA, siameseSemanticPattern}), among other problems.

\subsection{Meta-learning for Few-shot learning}

As deep learning began to yield superior results in many machine learning problems, some authors proposed to use meta-learning policies in order to optimize deep models \cite{learningtolearn, learningtooptimize, l2lgradient}. The meta-learning policy refers to an approach geared towards learning across tasks, followed by a fine-tuning process to adapt the model to new tasks, instead of a learning setting based on samples. The meta-learning objective aims to learn the parameters $\theta$ that minimize the loss across all given tasks.

FSL represents the perfect setting for testing meta-learning algorithms because of the few-labeled data given to each task. \cite{FSLsurvey}. The meta-learning strategy for tackling FSL problems has been traditionally divided into two stages: a meta-train and a meta-test phase. The meta-learning setup consists of episodic tasks, which can be seen as batches in traditional deep learning. A few-shot $K$-way $C$-shot image classification task is given $K$ classes and $C$ images per class. The task-specific dataset can be formulated as $D = \{D_{train}, D_{test}\}$, where $D_{train}=\{(X_i, y_i)\}_{i=1}^{N_{train}}$ denotes the classes reserved for the training phase and $D_{test}=\{(X_i, y_i)\}_{i=1}^{N_{test}}$ denotes the classes reserved for testing.

For each meta-train task $T$, $K$ class labels are randomly chosen from $D_{train}$ to form a support set and a query set. The support set, denoted by $S$, contains $K \times C$ samples ($K$-way $C$-shot) and the query set, denoted by $Q$, contains $n$ number of randomly chosen samples from the $K$ classes.
The training phase uses an episodic mechanism, in which each episode $E$ is loaded with a new random task taken from the training data. For the meta-test phase, the model is tested with a new task $T$ constructed with classes that were not seen during the meta-training stage.

Few-shot learning methods can be categorized based on what a given model seeks to meta-learn. Some approaches consist on having a base-learner and a meta-learner, where the meta-learner parameters are optimized by gradually learning across tasks to facilitate the fast learning of the base-learner for each specific task. Model-Agnostic Meta-Learning (MAML) (\cite{maml}) belongs to this class and its main idea is to search for a good parameter initialization such that the base learner can rapidly generalize with this initialization. Then, REPTILE (\cite{reptile}) incorporates an $L_2$ loss to simplify the computation of MAML. Further on, LEO (\cite{leo}) was proposed as a network to learn low dimensional latent embeddings of the model. 

On the other hand, Meta-SGD (\cite{meta-sgd}) also learns the base learner update direction and learning rate on the meta-learning process. Meta-Learner LSTM (\cite{optimizationfewshot}) proposes to fine-tune the base learner by a LSTM-based meta-learner, which takes as input the loss and gradient of the base learner with respect to each support sample. Other approaches seek to learn the similarity metric that is expected to be transferable across different tasks.

\subsection{Metric meta-learning for few-shot learning}

There exists an entire branch of meta-learning approaches that aim to solve the few-shot learning problem by leveraging some of the core ideas of metric learning. These approaches adopt the meta-learning setup to learn the similarity metric expected to generalize across different tasks. Certain baseline methods have achieved important milestones for few-shot learning, such as Prototypical Networks (\cite{prototypicalNets2017}), Matching Networks (\cite{matchingNetworks2016}) and Relation Networks (\cite{sung2017relationNet}). Prototypical Networks is the model we will be using as a basis in this work, as it also functions by taking the center of support samples embeddings from each class to create class prototypes. 

These models use a distance metric (typically the euclidean distance) to predict the probability of belonging to a class for each query sample. The Matching Networks predict the probability of a query sample by measuring the cosine similarity between the query embedding and each support sample embedding. On the other hand, the Relation Networks adopt a learnable CNN as the pairwise similarity metric, which takes the concatenation of feature maps of the support sample and the query sample as input and outputs the relation score. These three methods can be considered the base metric learning approaches for few-shot learning.

Further on, some recent works have focused on introducing task attention modules (\cite{chen2020multiscale, categoryTraversal, principalCharacteristics}), whereas others try to optimize the embeddings (\cite{convexoptimization}) and others add a second term to the loss function (\cite{principalCharacteristics}). Up until now, there has been a lack of research for loss functions that work for the problem of few-shot learning tackled from a metric-learning perspective. Although these methods utilize
deep networks to extract discriminant deep features, they do not take full advantage of
the relationship among the input samples. Hence, we are motivated to explore strategies to improve the feature embedding in terms of their efficiency to be transferable to
handle unseen class samples and their generality for few-shot classification.

Our work, which is based on prototypical networks, seeks to contribute alleviate this problem, by introducing the concept of distance in the loss function. Our proposed model belongs to these meta-learning approaches based on metric learning, by adopting the idea of the intra-class and inter-class variance into account for the construction of two different loss functions, which will help us to better optimize an embedding network to obtain more discriminant feature vectors.  We aim to meta-learn a feature embedding that performs well, not only in the training classes but more importantly, in the novel classes. 
Specifically, the feature embedding should map similar samples close to one another and dissimilar ones far apart. This is well-aligned with the philosophy of triplet-like learning. However, the general triplet network only interacts with
a single negative sample per update, while few-shot classification requires a comparison with multiple query samples, typically of different classes.

For this, we make use of two novel loss functions, briefly introduced above: a modified proto-triplet loss and inter-intra class nearest neighbors score loss for training a Convolutional Neural Network (CNN) as an embedding function. These loss functions optimize the embedding space across the meta-learning tasks. Our experimental results show the proposed losses yield an increase in accuracy in the miniImagenNet benchmark compared to other metric-based few-shot learning methods by a margin of $2\%$. We also demonstrate the capability of these loss functions to produce models able to generalize better to previously unseen classes, as demonstrated by experiments in the CUB-200, Caltech-101 and the Stanford Dogs and Cars datasets.

%\textcolor{blue}{Summarizing, our main contributions are the following:(see paper "Revisiting Metric Learning"}

%\textcolor{red}{
The main contributions of this paper are two loss functions based on contrastive learning approaches. The first loss function is inspired by the well-known triplet loss with adjustments for the few-shot learning setting. The second loss function is a novel method that calculates the error based on the quality of feature embeddings obtained.
%}

\section{DML-based loss functions for FSL}

% - Model description

\subsection{Proto-Triplet Loss}

The traditional Triplet Loss (\cite{triplets}) is widely used for metric learning, where the objective is to train a learner by using a similarity comparison metric between the sampled data. The loss function aims at pulling similar samples close to each other while pushing away the samples of different classes far away. This is the classic paradigm of clusters for good classification. The objective of the triplets is to create a manifold, where classes live on. Thus, the triplets are used to learn why classes are different and why are similar on the manifold.

Our proposed Proto-Triplet Loss is based on the original Triplet Loss with the difference that we are using the data points obtained as prototypes for calculating the loss, making it more suitable for the few-shot learning setting. 
The formula for the proto-triplet loss is similar to the original triplet loss:
{%\normal
\begin{equation}
    L(X_a, X_p, X_n) = \left [  \left \| f_a - f_p \right \|^2 - \left \| f_a - f_n \right \|^2 + \alpha \right ]_+ ,
\end{equation}}
%\noident 

where $\left [ \right ]_+$ is the max between 0 and the result, also known as hinge loss. $f_a$ is the embedding from the query, $f_p$ is the prototype from the same class as the query and $f_n$ is the nearest prototype of a different class as the query. $\alpha$ is a margin hyperparameter to be enforced between the negative and positive pairs. 
The goal of this function is to keep the distance between the query and the prototype of the same class smaller than the distance between the query and the prototype from a different class.

Furthermore, inspired in (\cite{ktuplets}), we propose to use the Proto-Triplet Loss with $K$ reference points to use more than one negative sample. In the meta-learning setting, we compare an instance with multiple classes, so our loss function should be more effective by performing a comparison with more prototypes of different classes.

For the negative samples, we take the $K$ nearest prototypes of different classes to form the triplets. The modified function is formulated as:

{\small
\begin{equation}
    L(X_a, X_p, X_n) = \frac{1}{K} \sum_{i \in U} \left [  \left \| f_a - f_p \right \|^2 - \left \| f_a - f_n \right \|^2 + \alpha \right ]_+ ,
\end{equation}}
where $U$ is the set of triplets made by the $K$ negative prototypes. The rest of the terms in the formula are the same as stated in the previous equation.

\subsection{ICNN Loss}
Given the basic idea of class separation and high density per class, the Inter and Intra Class Nearest Neighbors Score (ICNN Score) (\cite{tesis_ivan}) is proposed as a measure to remove noisy features and improve the performance of manifold-based algorithms for supervised dimensionality reduction, aiding the feature selection by a subset evaluation. The ICNN score is the result of searching for better methods of feature selection than frameworks based on things as the Maximal Information Coefficient \cite{make4010007}. For this, the ICNN is based on two main concepts: the inter-class and the intra-class distances. The former describes the distance between data points of different classes, while the latter refers to the distance between data points of the same class.

The ICNN assigns a score by measuring the distance and variance of the inter and intra $k$-nearest neighbors of each instance in the data. The equation to measure the data points and assign them a score is represented by three different terms: $\lambda$, $\omega$ and $\gamma$. The ICNN formula is given as follows:
\begin{equation}
    ICNN(X) = \frac{1}{\left | X \right |} \sum_{x_i \in X} \lambda(x_i)^\frac{1}{p} \omega(x_i)^\frac{1}{q} \gamma(x_i)^\frac{1}{r},
\end{equation}
where $p$, $q$ and $r$ are control constants.

\section{Designing the ICNNScore}
In order to build and understand the Inter/Intra Class Nearest Neighbors Distance Score (ICNNScore), it is necessary to define the concepts of  inter-class distance and intra-class distance \cite{james2013introduction}. The main idea for inter-class distance is that different classes should be distant or optimal with respect to a distance $d$ (Eq. \ref{GeneralInterClassDistance}),
%\begin{equation}
% D^*\left( l_i, l_j \right)= \sup_d \frac{1}{n_i n_j} \sum_{k_1=1}^{n_i} \sum_{k_2=1}^{n_j} d\left( x_{i, k_1},  x_{j, k_2} \right), \label{GeneralInterClassDistance}
%\end{equation}
\begin{equation}
D^*\left( C_i, C_j \right) = \sup_d
\frac{1}{n_i n_j} \sum_{x_j, x_k \in C_i}  d\left( x_j,  x_k \right), \label{GeneralInterClassDistance}
\end{equation}
which basically represents an average of the maximal distance matrix between elements of the two classes $i$ and $j$. %and $x_{i,j}$ is an instance $j$ of the class $l_i$. 
In a similar way, the optimal intra-class distance is defined as
\begin{equation}
I^* \left( l_i \right) = \inf_d \frac{1}{n_i} \sum_{k_1=1}^{n_i} \sum_{k_2=k_1+1}^{n_i} d\left( x_{i, k_1},  x_{j,k_2} \right).
\end{equation}
These quantities represent the distance between different classes and the density of those classes, respectively. Although these equations represent the basic ideas of inter- and intra-class distances, they are useful to develop a score that can be transformed into a loss function. For this, we propose  a function that minimizes penalization to the neighborhood class elements to a particular sample $x_i \in C_j$. In addition, we also define a function that  maximizes the penalization of elements $x_k \notin C_j$ to such sample $x_i$ based on their distance proximity.  In the ideal case with respect to $x_i$ at its neighborhood, the distances of elements in the same class are close to zero. In addition,  elements in the same neighborhood, but in different classes than $x_i$, their distances to it are as large as possible (Figure \ref{fig:NeighborhoodICNNScore}). The former bound is easy to control and has tendency to zero. However, the latter bound can be problematic given the different distances that can be obtained. Thus, instead of having any large random possible distances, we force the  bound of these distances to a known constant.
\begin{figure}
\centering
\includegraphics[width=.3\textwidth]{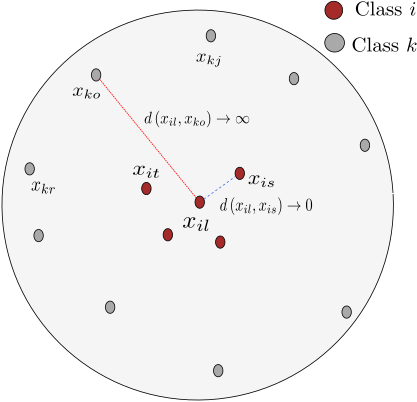}
\caption{The ideal case for the ICNNScore} \label{fig:NeighborhoodICNNScore}
\end{figure}
For this purpose, we define the following neighborhoods: $N_{x_i}$ the $k$-nearest neighbors of  $x_i$ at the same class, and $N_{\tilde{x}_i}$ the $k$-nearest neighbors of $x_i$ at different classes. Using these sets, it is possible to define the following function  based on a differentiable distance $d\left( x_i, x_j \right)$,
\begin{align}
 \alpha \left( x_i \right) = \max \left\{ d\left( x_i, p \right) \vert p \in N_{x_i} \cup N_{\tilde{x}_i} \right\} 
\end{align}
We also have the distance function,
\begin{equation}
  \theta \left( x_i \right) = \min \left\{ d\left( x_i, p \right) \vert p \in N_{x_i} \cup N_{\tilde{x}_i} \right\} 
\end{equation}
Using these functions, we define a normalization of the distances for all the elements in the neighborhood of $x_i$:
\begin{equation}
 h \left( x_i, p \right) = \frac{d\left( x_i, p \right) - \theta\left( x_i \right) }{ \alpha\left( x_i \right) - \theta\left( x_i \right)},
\end{equation}
which is a min-max normalization that enforces the desired bounds in the interval $[0,1]$. Now, we define a penalization $\lambda$ function for defining the concept of "far" or "near" to a specific $x_i$:
\begin{align}
    \lambda_{\tilde{x}} \left( x_i \right) & = \sum_{p\in N_{\tilde{x}_i}} h \left( x_i, p \right) \label{TotalNormDistanceNotClass}, \\
    \lambda_{x} \left( x_i \right) & = \sum_{p\in N_{x_i}} \left[ 1-h\left( x_i, p \right) \right] \label{TotalNormDistanceClass}
\end{align}
The first equation (Eq. \ref{TotalNormDistanceNotClass}) represents the normalized penalization of the elements in a certain neighborhood of elements in different classes than the class of $x_i$, and the second equation represents a similarity to elements at $N_{x_i}$. Therefore, we have
\begin{align}
    \lambda\left( x_i \right) = \frac{\lambda_{\tilde{x}} \left( x_i \right) + \lambda_{x} \left( x_i \right) }{\left|N_{\tilde{x}_i} \cup N_{x_i}  \right|}
    \label{Lambda:BeforeFixing}
\end{align}
Thus, if we want to maximize this function, the average $\lambda_{\tilde{x}}/N$, with $N=\left|N_{\tilde{x}_i} \cup N_{x_i}  \right|$, needs to be maximized. Thus, the distances of the elements not belonging to the class of $x_i$ are maximized to it. In the case of $\lambda_{x}/N$, the similarities are maximized toward one, thus forcing the elements belonging to the class of $x_i$ to get near to it or to zero. However, when going through the battery of experiments a change is proposed on the original $\lambda$ function (Eq. \ref{Lambda:BeforeFixing}) to improve results for the few-shot learning setup,
\begin{equation}
    \lambda \left( x_i \right) = \frac{\lambda_{\tilde{x}} \left( x_i \right)}{\left|N_{\tilde{x}_i}\right|} + \frac{\lambda_{x} \left( x_i \right)}{\left| N_{x_i}  \right|}.
\end{equation}
Therefore, we have that each of $\lambda's$ $x$ functions finish as
 \begin{align}
     \frac{\lambda_{\tilde{x}} \left( x_i \right)}{\left|N_{\tilde{x}_i}\right|} & \thickapprox E_{h}  \left[ h\left( x_i , p \right) \vert p \in N_{\tilde{x}_i} \right],  \nonumber \\
         \frac{\lambda_{x} \left( x_i \right)}{\left| N_{x_i}  \right|} & \thickapprox 1 -  E_{h} \left[ h\left( x_i , p \right) \vert p \in N_{x_i} \right]  \nonumber.
 \end{align} \noindent These expected values depend on the original distribution of the $d\left( x_i, p \right)$ given that we have a uniform a priori, and the likelihood of the defined distance. Now, if the $\lambda$ function is maximized with respect to the representation of $x_i$ and $p$, two phenomena occur: First, the inter-distance on $N_{x_i}$ is minimized, and second, the intra-distance on $N_{\tilde{x}_i}$ is maximized. This basically corresponds to highly dense and compact class clusters with large distances between them, our ideal case for classification. Therefore, integrating this $\lambda$ function to a cost function will enforce these properties in the generated embedding during the learning process of a neural network. 
 
\begin{figure}
\centering
\includegraphics[width=0.45\textwidth]{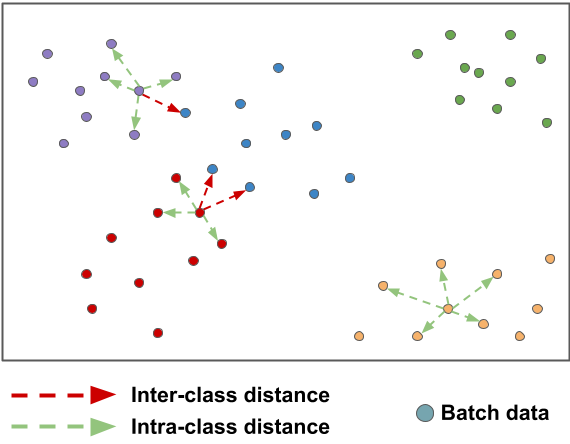}
\caption{Lambda ($\lambda$) is the function that penalizes the neighbors of $X_i$ with the same class based on how distant they are, and the neighbors of different classes based on how close they are.} \label{fig:lambda}
\end{figure}

\begin{proposition}
\label{Proposition1}
Given the $\lambda$ function, and a smooth embedding representation for the samples $x_i$ based on weights $\mathcal{F}$. If $\lambda$ is maximized by using a gradient method on those weights, the embedding will change to increase the inter-class distance and minimize the intra-class distance at the classes.
\end{proposition}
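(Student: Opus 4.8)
The plan is to treat $\lambda$ as a function of the pairwise distances $d(f_i,f_p)$, which are themselves smooth functions of the embeddings $f_i=\mathcal{F}(x_i)$ and hence of the weights, and to show that the gradient-ascent direction on the weights decomposes into contributions each of which moves a distance in the claimed direction. First I would invoke the standard monotonicity of gradient ascent: writing the weight trajectory as $\dot{\mathcal{F}}=\nabla_{\mathcal{F}}\lambda$, the chain rule gives $\tfrac{d}{dt}\lambda=\|\nabla_{\mathcal{F}}\lambda\|^2\ge 0$, so $\lambda$ is non-decreasing along the flow. The real content is to translate ``$\lambda$ increases'' into ``inter-class distances increase and intra-class distances decrease,'' which I would establish by a sign analysis of the partial derivatives of $\lambda$ with respect to each neighbor distance.

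Next I would compute those partials term by term. For a neighbor $p\in N_{\tilde{x}_i}$ of a different class, $\lambda$ contains the summand $h(x_i,p)/|N_{\tilde{x}_i}|$; since $h$ is, for fixed normalization constants $\alpha,\theta$, a strictly increasing affine function of $d(f_i,f_p)$ with slope $1/(\alpha-\theta)>0$, we get $\partial\lambda/\partial d(f_i,f_p)>0$, so ascent drives this distance up, enlarging the inter-class distance. For a same-class neighbor $p\in N_{x_i}$, $\lambda$ contains $\bigl[1-h(x_i,p)\bigr]/|N_{x_i}|$, whose derivative in $d(f_i,f_p)$ equals $-1/\bigl(|N_{x_i}|(\alpha-\theta)\bigr)<0$, so ascent pulls this distance down, shrinking the intra-class distance. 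Propagating through the Jacobian $J$ of the embedding, the induced first-order change of the embeddings is $\delta f=\eta\,JJ^\top\nabla_f\lambda$, which satisfies $\langle\nabla_f\lambda,\delta f\rangle=\eta\|J^\top\nabla_f\lambda\|^2\ge 0$; hence the embeddings move, to first order, along $\nabla_f\lambda$ and so realize the per-distance signs just derived. Read through the expected-value identities established immediately before the statement, namely $\lambda(x_i)\approx E_h[h\mid N_{\tilde{x}_i}]+\bigl(1-E_h[h\mid N_{x_i}]\bigr)$, this is exactly the asserted separation and compaction of classes.

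The hard part will be the min–max normalization, since $\alpha(x_i)=\max$ and $\theta(x_i)=\min$ couple every $h(x_i,p')$ to the extreme distances and are only piecewise smooth. The clean sign computation above is exact for the ``interior'' neighbors that are neither the farthest nor the nearest, where $\partial\alpha/\partial d=\partial\theta/\partial d=0$ and the normalization constant is locally frozen. For the argmax pair one has $h\equiv 1$ and for the argmin pair $h\equiv 0$, so those two coordinates are pinned, yet perturbing them rescales all other terms through $\partial h_{p'}/\partial\alpha\le 0$ and $\partial h_{p'}/\partial\theta\ge 0$, which can oppose the individual coordinate signs. I would handle this by restricting to the generic region where the argmax and argmin are unique — the ties form a measure-zero set, so the flow is well defined almost everywhere — treating $\alpha,\theta$ there as the selected extreme distances, and arguing that the proposition is a statement about the aggregate behaviour of the neighborhood rather than a claim that each individual pairwise distance moves monotonically. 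In aggregate, by the monotonicity of $h$ in $d$, maximizing $\lambda$ is equivalent to maximizing $E_h[h\mid N_{\tilde{x}_i}]$ and minimizing $E_h[h\mid N_{x_i}]$, which is precisely the increase of inter-class distance and decrease of intra-class distance claimed, completing the argument.
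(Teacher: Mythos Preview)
Your proof is correct at the level of rigor the paper operates at, but it follows a genuinely different route from the paper's own argument. The paper does \emph{not} work with $\lambda$ directly; instead it embeds the proposition inside the full $ICNNLoss$, bounds $\omega(x_i)^{1/q}$ and $\gamma(x_i)^{1/r}$ below by constants $\sigma_\omega,\sigma_\gamma$ at each gradient step, and thereby obtains $ICNNLoss(X)\le -\log\bigl[\tau\sum_{x_i}\lambda(x_i)^{1/p}\bigr]$ with $\tau=\sigma_\omega\sigma_\gamma/|X|$. It then invokes the standard monotone decrease $L(\theta_{t+1})\le L(\theta_t)$ under gradient descent on the loss to conclude that $\sum_{x_i}\lambda(x_i)^{1/p}$ is being maximized, and asserts the inter/intra conclusion from there. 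In short, the paper's strategy is ``bound the loss by the $\lambda$ term, use descent monotonicity, read off the conclusion,'' whereas yours is ``differentiate $\lambda$ in each pairwise distance, check signs, push through the embedding Jacobian.''

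Your approach buys a cleaner justification of the last logical step: the paper never actually argues \emph{why} maximizing $\lambda$ forces inter-class distances up and intra-class distances down --- it simply states it --- while your sign analysis of $\partial\lambda/\partial d(f_i,f_p)$ makes that mechanism explicit. You also confront the non-smoothness of the min--max normalization head-on, which the paper ignores entirely. Conversely, the paper's detour through $ICNNLoss$ has the practical virtue of tying the proposition to the quantity that is actually optimized in training, so its argument (such as it is) transfers immediately to the loss used in the experiments, whereas yours proves the proposition as literally stated about $\lambda$ alone.
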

\begin{proof}
Appendix \ref{Proof:Proposition1}
\end{proof}
%\noident 
This proposition allows to say that a cost function based on the ICNN  helps to separate the class clusters when training on data. Next, given that we have  $h\left(x_i \right)\leq 1$, if we add overall terms $i$, we finish with
\begin{align}
    \sum_{p\in N_{\tilde{x}_i}} h\left(x_i, p \right) & \leq \left| N_{\tilde{x_i}} \right| \longrightarrow \frac{\lambda_{\tilde{x}}\left( x_i \right)}{\left| N_{\tilde{x_i}} \right|} \leq 1
\end{align}
In a similar way, $0 \leq \frac{\lambda_x \left( x_i \right) }{ \left| N_{x_i} \right|}\leq 1$ given that also $h\left(x_i , p\right) \leq 1$ for $p \in N_{\tilde{x}} \cup N_{x}$. In this way, $ 0 \leq \lambda \left( x_i \right) \leq 2$.

Now, we would like to avoid large changes while learning the distances between elements in the neighborhood of $x_i$. For this, we introduce a penalization function over the terms $h$ that minimizes the possible variance (Eq. \ref{VarianceDistanceMinimization}).
\begin{equation}
    \omega \left( x_i \right) = \alpha - \left[ Var\left( \lambda_{\tilde{x}} \left( x_i \right)  \right) +  Var\left( \lambda_{x}\left( x_i \right)  \right) \right]
    \label{VarianceDistanceMinimization}
\end{equation}
%\noident 
The variance of this function depends on the original distribution of values $d\left( x_i, p \right)$. 

\begin{proposition}
\label{Proposition2}
Given the $\omega$ function, and the smooth embedding representation based on weights $\mathcal{F}$. If the loss function is minimized, the $\omega$ is maximized by using the gradient method on those weights, then the inter and intra-class variances can be minimized by selecting an adequate $alpha$.
\end{proposition}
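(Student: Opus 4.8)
The plan is to exploit the fact that the only data-dependent part of $\omega$ is the variance term, while $\alpha$ is a fixed constant chosen before training. First I would note that, since $\alpha$ does not depend on the embedding weights $\mathcal{F}$, the gradient of $\omega$ with respect to those weights satisfies
\begin{equation}
\nabla_{\mathcal{F}}\,\omega(x_i) = -\nabla_{\mathcal{F}}\Big[ Var\big(\lambda_{\tilde{x}}(x_i)\big) + Var\big(\lambda_{x}(x_i)\big)\Big].
\end{equation}
Consequently, a gradient step that maximizes $\omega$ (equivalently, that minimizes the loss through the $\omega^{1/q}$ factor of the ICNN score) moves the weights along $+\nabla_{\mathcal{F}}\omega$, which is precisely the descent direction for the sum of variances. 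This reduces ``maximizing $\omega$'' to ``performing gradient descent on $Var(\lambda_{\tilde{x}})+Var(\lambda_{x})$,'' exactly the kind of reduction used for Proposition \ref{Proposition1}.

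Next I would translate the variances of the penalization functions back into the variances of the underlying class distances. The terms constituting $\lambda_{\tilde{x}}$ and $\lambda_{x}$ are the normalized quantities $h(x_i,p)$, and $h$ is the affine (min-max) rescaling of the base distance $d(x_i,p)$ by the factor $(\alpha(x_i)-\theta(x_i))^{-1}$. Hence, up to this per-sample scale, the variance of $\lambda_{\tilde{x}}$ is the variance of the inter-class distances in $N_{\tilde{x}_i}$ and the variance of $\lambda_{x}$ is the variance of the intra-class distances in $N_{x_i}$. Driving these variances toward zero forces the different-class neighbors of $x_i$ to lie at a common (large) distance and the same-class neighbors at a common (small) distance, i.e.\ it simultaneously shrinks the spread of both the inter- and intra-class distances, which is the stated conclusion.

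For the ``adequate $\alpha$'' clause I would use the bounds already established for $h$. Since $h(x_i,p)\in[0,1]$, Popoviciu's inequality bounds the variance of the normalized distances in each neighborhood by $1/4$, so $Var(\lambda_{\tilde{x}}(x_i))+Var(\lambda_{x}(x_i))\le 1/2$. Choosing $\alpha\ge 1/2$ then guarantees $\omega(x_i)\ge 0$ throughout training, so that the root $\omega^{1/q}$ entering the ICNN score is real and increases monotonically as the variances decrease; this is exactly what makes the gradient ascent on $\omega$ meaningful. The constant $\alpha$ plays no role in the gradient direction, only in keeping the overall product factor well-defined and positively oriented.

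The main obstacle I anticipate is the second step. Because the normalizers $\alpha(x_i)$ and $\theta(x_i)$ are themselves functions of the embedding, and therefore of $\mathcal{F}$, the correspondence between $Var(h)$ and $Var(d)$ is not a clean fixed-scale proportionality: as learning proceeds the scale $(\alpha(x_i)-\theta(x_i))^{-1}$ changes together with the distances. I would address this either by arguing that the normalizers vary slowly relative to the within-neighborhood spread, so the proportionality holds to first order, or by working directly with the $h$-variances and invoking the monotonicity of the min-max transform, so that a decrease in $Var(h)$ still implies a contraction of the spread of the raw distances. Making this correspondence rigorous, rather than the elementary gradient bookkeeping of the first step, is where the real work lies.
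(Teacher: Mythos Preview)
Your first step --- observing that $\nabla_{\mathcal{F}}\omega = -\nabla_{\mathcal{F}}[Var(\lambda_{\tilde{x}})+Var(\lambda_{x})]$ and hence that maximizing $\omega$ via the loss performs gradient descent on the variance sum --- is exactly the argument the paper gives, and indeed the paper explicitly says it reuses the setup of Proposition~\ref{Proposition1}. So the core of your proposal matches.

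Where you diverge is in the ``adequate $\alpha$'' clause and in how much you try to prove. The paper does not invoke Popoviciu; it takes the variance of $\lambda_{\tilde{x}}(x)$ as a function of $x\in X$ (not the variance of the individual $h(x_i,p)$ over $p$ as you do), bounds it by $k_1^2$ with $k_1=\max_{x_i}|N_{\tilde{x}_i}|$, and symmetrically bounds $Var(\lambda_x)$ by $k_2^2$. The ``adequate'' choice is then $\alpha=k_1^2+k_2^2$, which keeps $\omega\ge 0$. Your Popoviciu route is cleaner if one reads $Var(\lambda_{\tilde{x}}(x_i))$ as the spread of $h$-values within the neighborhood of $x_i$, but that is not the reading the paper's derivation supports; under the paper's reading, $\lambda_{\tilde{x}}\in[0,k_1]$ and Popoviciu would give $k_1^2/4$, not $1/4$.

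Your second step (pulling the $h$-variances back to raw $d$-variances through the min--max normalization) and the obstacle you flag about the moving normalizers $\alpha(x_i),\theta(x_i)$ are entirely absent from the paper: it simply asserts that minimizing $Var(\lambda_{\tilde{x}})+Var(\lambda_{x})$ ``regulariz[es] the intra-class and inter-class distances'' and stops there. So you are attempting to justify something the paper takes for granted; that extra work is not wrong, but it is not what the paper does, and the proposition as stated only claims minimization of the $\lambda$-variances, not of the raw-distance variances.
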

\begin{proof}
Appendix \ref{Proof:Proposition2}
\end{proof}

%\noident 
Thus, optimizing a loss function based on the ICNN helps to minimize the variance of the learned manifold or in other words it favors dense clusters. In order to obtain the bounds for $\omega$, we can use a classic definition of variance,
\begin{align*}
    Var\left( \lambda_{\tilde{x}} \left( x \right) \right)
    & = E \left( \lambda_{\tilde{x}}^2 \left( x \right) \right) - E^2\left[ \lambda_{\tilde{x}} \left( x \right) \right]
\end{align*}

\noindent
Thus, 
\begin{align*}
    E\left[ \lambda_{\tilde{x}}\left(x \right) \right] & \thickapprox \frac{1}{\left|X \right|} \sum_{x\in X} \lambda_{\tilde{x}} \left( x \right) \\
    & \thickapprox  \frac{1}{\left|X \right|} \sum_{x\in X} \sum_{p\in N_{\tilde{x}}} h \left( x, p \right) 
\end{align*}
\noindent In this way, we have that the variance is bounded as
\begin{align*}
    Var\left( \lambda_{\tilde{x}} \left( x \right) \right) \leq & 
     \frac{1}{\left|X \right|} \sum_{x\in X} \left( \sum_{p\in N_{\tilde{x}} }  h \left( x, p \right) \right)^2 -...\\
     & \left( \max_{x_i} \left| N_{\tilde{x}_i} \right| \right)^2 \\
     \leq &   \frac{1}{\left|X \right|} \sum_{x\in X} \left( \sum_{p\in N_{\tilde{x}} }  h \left( x, p \right) \right)^2 \\
     & - \left( \max_{x_i} \left| N_{\tilde{x}} \right| \right)^2 \leq k_1^2,
 \end{align*}
 where $k_1 = \max_{x_i} \left| N_{\tilde{x}} \right|$
 This $k_1$ happens when all the elements at $N_{\tilde{x}}$ have a distance of one. In a similar way, we have $Var\left( \lambda_{x} \left( x \right) \right) \leq k_2^2$ which is also  $k_2 = \max_{x_i} \left| N_{x} \right|$
 In the reverse case, we have that 
the lower bound for both variances is 0.
 In this way, we have that we can use $\omega = k_1^2+k_2^2$ to go force the best case where $x$ has the maximal separation from elements not in the class and minimal separation from elements in the same class. 

 \begin{figure}
\centering
\includegraphics[width=0.45\textwidth]{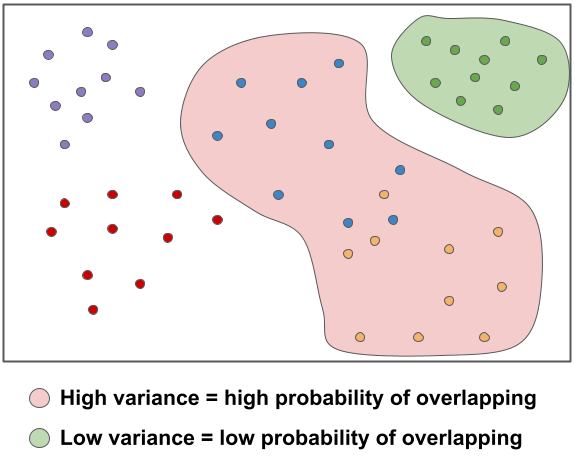}
\caption{Omega ($\omega$) penalizes the distance variance of neighbors. High variance is not desirable as it increases the chance of class overlaps.} \label{fig:omega}
\end{figure}

Finally, we have the last penalization for the ICNNScore which represents how many elements in the neighborhood  $N_{\tilde{x}_i} \cup N_{x_i} $ belong  to the same class as $x_i$,
\begin{equation}
    \gamma \left( x_i \right) = \frac{\left| N_{x_i}  \right|}{\left| N_{x_i}  \right|+\left| N_{\tilde{x_i}}  \right|}
\end{equation}
Therefore, using these three functions, it is possible to build the ICCNScore,
\begin{equation}
    ICNN\left(X\right) = \frac{1}{\left| X \right|}  \sum_{x_i \in X} \lambda\left(x_i\right)^{\frac{1}{p}} \omega\left(x_i\right)^{\frac{1}{q}} \gamma\left(x_i\right)^{\frac{1}{r}}
\end{equation}
In order to use this score as a loss function  we  simply use the negative value of the logarithm function,
\begin{equation}
    ICNNLoss\left( X \right) = - log\left( ICNN\left(X\right) \right)
    \label{ICNNLOSS:def}
\end{equation}
Thus minimization of the ICNNLoss becomes the maximization of each of the elements of the ICNN score.

\section{Experimental Design}

% \textcolor{blue}{La idea es describir en un parrafo toda la seccion: por que ciertos datasets, porque las metricas y brevemente que se busca responder con el diseño experimental}

The experiments made on this work were designed to answer the following questions: First, is the Proto-Triplet Loss competitive with other metric-learning state-of-the-art models? Second, is the ICNN Loss competitive with other metric-learning state-of-the-art models? Third, how do our loss functions optimize the feature space to make the features more discriminating?
 
The results obtained in this study indicate that both the Proto-Triplet loss and the ICNN loss are competitive with current state-of-the-art approaches. Furthermore, when these loss functions are combined, they achieve a high level of performance that surpasses other models. Finally, we can see in Figure \ref{fig:umaps} that our loss functions optimize the feature space by providing a clearer intra-class and inter-class separation of the feature embeddings.

Having defined the equation for the ICNN Loss function, we test different scenarios by varying the data points given to the algorithm and combining it with cross entropy loss and Proto-Triplet loss. For training the network, we know to which class the support data and the query data belong, therefore, we can use both sets to assign a score to the current batch. The different options were tested by assigning a different score to the support data, the query data and both the support and query data. For the support data, calculating the ICNN Score is straightforward, as each data point has an assigned score based on its nearest neighbors and is averaged across the set. For the query data, each query is given a score based on the nearest neighbors of the support data points. When using both the support and query data, the data points are combined and considered for calculating the score for each instance. The options for the algorithm are the following:
    \begin{itemize}
        \item Score only on support data
        \item Score on support data and score in query data
        \item Score of support and query data together
    \end{itemize}

We can also opt to make a different choice of data given to the ICNN algorithm when working with the query set. As our method aims at classifying the queries based on the prototypes, we added the option to give a score to the query set by using the prototypes as neighbors. Each query instance is given a score by using the $k$ nearest prototypes.

\section{Materials, Metrics, and Methods}

\subsection{Datasets}
We evaluate our experiments using the MiniImageNet dataset (\cite{matchingNetworks2016}), which is a version of the ImageNet Large Scale Visual Recognition Competition 2012 (\cite{imagenet}). Following the split proposed by \cite{optimizationfewshot}, this version of ImageNet is divided into 64 classes for training, 16 classes for validation and 20 classes for testing, making a total of 100 classes for the meta-learning. Each class contains 600 images to have a total of 60,000 images. This dataset is used as a benchmark to evaluate most of the state-of-the-art few-shot learning methods.

We further test the proposed framework with other well-known datasets for assessing the generalization capabilities of te final model trained in MiniImagnet, namely the Caltech-101, Stanford Dogs, Stanford Cars and CUB datasets. Caltech \cite{griffin2007caltech} is a dataset consisting of 101 widely varied categories, as well as a background category. For each class, there are around 40 and 800 images, while most classes have about 50 images. These classes are randomly split to use 20 classes as the testing set. Stanford Cars \cite{stanford_cars} contains fine-grained images from 196 classes of cars.

%Cada clase contiene aproximadamente de 40 a 800 imágenes, con un total de alrededor de 9k imágenes

\subsection{Evaluation Metrics}
We follow the same method as other metric learning methods for evaluating our results (\cite{prototypicalNets2017, matchingNetworks2016, sung2017relationNet}), reporting the mean accuracy (\%) of 1,000 randomly constructed tasks taken from the testing set along with the 95\% of confidence interval. Each task of the testing phase contains 15 query images per class. 

\subsection{Implementation Details}

We follow the same setting as other few-shot learning models (\cite{matchingNetworks2016, prototypicalNets2017, sung2017relationNet}), making the experiments under the setting of 5-way 1-shot and 5-way 5-shot, and using 15 query images for each class in the task. The input images are resized to 84 $\times$ 84, and then normalized. We construct 100 random tasks for the training phase over 200 epochs. We validated each epoch with another 500 randomly constructed tasks using images from the validation set. Finally, in the testing phase we construct 1000 tasks with images from the testing set. We use the same random seed over all the experiments.

Based on the current state-of-the-art typical setting, we tested two different networks for the feature extractor: a ConvNet and a ResNet-12. Using larger networks makes the model more prone to overfitting, since there is low availability of data. The ConvNet follows the same architecture setting as previous works (\cite{matchingNetworks2016}, \cite{prototypicalNets2017}). This network is composed of 4 layers of convolutional blocks, with each block having a 3 $\times$ 3 convolution with 64 filters, followed by a batch normalization and a ReLU layer. For this network, we used the Adam Optimizer, as in previous works, with an initial learning rate of $1 \times 10^{-3}$ and a step size of 20. The ResNet follows the same architecture as other recent works (\cite{categoryTraversal}, \cite{convexoptimization}). This network is pre-trained with images from the training set, using the Stochastic Gradient Descent (SGD) optimizer with a momentum of 0.9 and a learning rate of 0.1 over 100 epochs with a batch size of 128. After the pre-training, we meta-train the network using the SGD optimizer with a learning rate of $1 \times 10^{-4}$, momentum of 0.9 and a step size of 20.

%\textcolor{blue}{Creo que hace falta aqui una sub-seccion, auqnue sea muy breve, sobre que configuraciones de experimentos hiciste y porque (Ablation studies): combinaciones de las dos perdidas con lso dos features extractors y resultaods sin usarlo Creo que lo pusiste mas abajo, seria cuestion de ponerlo aqui y detallarlo un poco mas}

\subsubsection{Ablation Studies}

The ablation studies were split into three separate experimental setups: experiments with different combinations of the proto-triplets and the ICNN lossess and various tests with combinations of the network used as feature extractor (ConvNet and ResNet-12). For all the experiments, we repeated the design choices using both networks to compare against baselines and recent methods in few-shot image classification. The models using a ResNet-12 greatly improve the accuracy performance compared to those using a ConvNet.

The initial step in our methodology is to conduct experimentation in order to assess the algorithm's efficacy when applied to the individual components in a few-shot task. 
\begin{itemize}
    \item (\textit{i}) ICNN using only the support data. This represents the most basic use of our loss function, since it is used only on a subset of the data available on each training task.
    \item (\textit{ii}) ICNN using the support data and the query data separately with the aim of determining the degree of improvement achieved when utilizing both sets of data, in comparison to utilizing only the support data. The loss for each set is calculated and subsequently combined to determine the overall loss.
\end{itemize}
The next step of our experimentation incorporates the cross-entropy loss function into the pre-existing settings in order to evaluate how the model improves by its inclusion.
\begin{itemize}
    \item (\textit{iii}) Addition of the cross-entropy loss and ICNN using only the support data. This experiment was executed to compare how the model performance improves under the most basic setting.
    \item (\textit{iv}) Addition of the cross-entropy loss and ICNN in support and query data separately. Similarly to the previous settings, we added the cross-entropy loss to evaluate the impact on the performance of the model.
\end{itemize}
In the next phase of experimentation, we introduce prototypes into the task data. The prototypes are computed using both the support and query sets, and subsequently used for the calculations of the ICNN algorithm.
\begin{itemize}
    \item (\textit{v}) ICNN with the prototypes from the support and query data. The rationale behind this approach is that the prototypes serve as the central representation of each class embedding, therefore the ICNN algorithm should be able to provide accurate scores with these mean representations.
    \item (\textit{vi}) Add the cross entropy and the ICNN with the prototypes from the support and query data. Similarly to previous settings, this experiment aims to determine how the model performance is enhanced by adding the cross-entropy loss to the prototypes setting.
\end{itemize}
In the final step of our experimentation, we evaluate the algorithm's performance by calculating the ICNN loss using the complete task data, which is obtained by combining the support and query sets.
\begin{itemize}
    \item (\textit{vii}) ICNN using the full data from each task. Through this approach, our goal is to evaluate the algorithm's performance using the entirety of the data available for each task.
    \item (\textit{viii}) Adding the cross-entropy loss and the ICNN using the full data from each task. Same as with previous settings, we aim to evaluate how the cross-entropy improves the current ICNN setting.
\end{itemize}

\section{Results and Discussion}

\subsection{Standard few-shot learning evaluation}

\setlength{\tabcolsep}{10pt}

\begin{table*}[]
\centering
% \resizebox{0.5\textwidth}{!}{
\begin{tabular}{@{}rccc@{}}
\toprule
\textbf{Model}                                    & \multicolumn{1}{l}{\textbf{Feature Extractor}} & \textbf{1-shot}             & \textbf{5-shot}             \\ \midrule
\multicolumn{1}{r|}{Matching Networks \cite{matchingNetworks2016}}            & ConvNet                                        & 43.56 $\pm$ 0.84\%          & 55.31 $\pm$ 0.73\%          \\
\multicolumn{1}{r|}{Prototypical Networks \cite{prototypicalNets2017}}        & ConvNet                                        & 49.42 $\pm$ 0.78\%          & 68.20 $\pm$ 0.66\%          \\
\multicolumn{1}{r|}{Relation Networks \cite{sung2017relationNet}}            & ConvNet                                        & 50.44 $\pm$ 0.82\%          & 65.32 $\pm$ 0.70\%          \\
\multicolumn{1}{r|}{Baseline* \cite{chen2019closer}}                    & ConvNet                                        & 41.08 $\pm$ 0.70\%          & 54.50 $\pm$ 0.66\%          \\
\multicolumn{1}{r|}{MAML \cite{maml}}                         & ConvNet                                        & 48.70 $\pm$ 1.84\%          & 63.11 $\pm$ 0.92\%          \\
\multicolumn{1}{r|}{Reptile \cite{reptile}}                      & ConvNet                                        & 49.97 $\pm$ 0.32\%          & 65.99 $\pm$ 0.58\%          \\ \midrule
\multicolumn{1}{r|}{Ours Proto-Triplet} & ConvNet   & \textbf{49.82 $\pm$ 0.73\%} & \textbf{68.76 $\pm$ 0.46\%} \\
\multicolumn{1}{r|}{Ours ICNN}                   & ConvNet                                        & \textbf{49.71 $\pm$ 0.78\%} & \textbf{68.66 $\pm$ 0.58\%} \\ \midrule
\multicolumn{1}{r|}{SNAIL \cite{snail}}                        & ResNet-12                                      & 55.71 $\pm$ 0.99\%          & 68.88 $\pm$ 0.92\%          \\
\multicolumn{1}{r|}{DN4 \cite{dn4}}                          & ResNet-12                                      & 54.37 $\pm$ 0.36\%          & 74.44 $\pm$ 0.29\%          \\
\multicolumn{1}{r|}{TADAM \cite{oreshkin2019tadam}}                        & ResNet-12                                      & 58.50\%                     & 76.70\%                     \\
\multicolumn{1}{r|}{K-tuplet Net \cite{ktuplets}}                 & ResNet-12                                      & 58.30 $\pm$ 0.84\%          & 72.37 $\pm$ 0.63\%          \\
\multicolumn{1}{r|}{ProtoNets + CTM \cite{categoryTraversal}}              & ResNet-12                                      & 59.34 $\pm$ 0.55\%          & 77.95 $\pm$ 0.06\%          \\
\multicolumn{1}{r|}{Principal Characteristic Net \cite{principalCharacteristics}} & ResNet-12                                      & 63.29 $\pm$ 0.76\%          & 77.08 $\pm$ 0.68\%          \\ \midrule
\multicolumn{1}{r|}{Ours Proto-Triplet} & ResNet-12 & \textbf{61.32 $\pm$ 0.52\%} & \textbf{79.93 $\pm$ 0.76\%} \\
\multicolumn{1}{r|}{Ours ICNN}                   & ResNet-12                                      & \textbf{60.79 $\pm$ 0.62\%} & \textbf{80.41 $\pm$ 0.47\%} \\ \bottomrule
\end{tabular}
% }
\caption{Test accuracies on miniImagenet in the 5-way setting for both 1-shot and 5-shot.}
\label{tab:comparison-stateoftheart}
\end{table*}

For the MiniImageNet dataset, we evaluate our method under the two most common few-shot learning settings: 5-way 1-shot and 5-way 5-shot. The results are compared against base metric learning methods for few-shot classification (\cite{prototypicalNets2017, matchingNetworks2016, sung2017relationNet}), and against recently proposed methods (\cite{ktuplets, categoryTraversal, principalCharacteristics}).

As detailed in Table \ref{tab:comparison-stateoftheart}, our method outperforms most of the baselines in the 5-way 1-shot setting, getting only lower accuracy than Reptile (\cite{reptile}) and Relation Networks \cite{sung2017relationNet}) when using a ConvNet as Feature extractor, and lower accuracy than Principal Characteristics Net (\cite{principalCharacteristics}) when using a ResNet-12. For the 5-way 5-shot setting, our method outperforms all the baseline methods and the recent ones using ConvNet and ResNet-12 as feature extractors.

When using a ConvNet as the feature extractor, our method achieves an improvement of 8.8\% over the baseline (\cite{chen2019closer}), 5.3\% over the Matching Networks (\cite{matchingNetworks2016}) and 0.4\% over the base prototypical Networks for the 5-way 1-shot setting. For the setting of 5-way 5-shot, we achieve an improvement of 13.4\% over the Matching Networks, 3.4\% over the Relation Networks and 0.5\% over the prototypical networks.

When using a ResNet-12 as the feature extractor, our method achieves an improvement of 5.6\% over SNAIL (\cite{snail}), 3\% over the K-tuplet Net (\cite{ktuplets}) and 2\% over the Category Traversal Module (\cite{categoryTraversal}) for the few-shot setting of 5-way 1-shot. On the setting of 5-way 5-shot, we obtained an improvement of 11.6\% over SNAIL, 8.1\% over the K-tuplets Net and 3.4\% over the Principal Characteristics Net (\cite{principalCharacteristics}).

\subsection{Results of the ablation studies}

\begin{table*}[]
\centering
\begin{tabular}{@{}rcll@{}}
\toprule
\textbf{Model} &
  \multicolumn{1}{l}{\textbf{Backbone}} &
  \multicolumn{1}{c}{\textbf{1-shot}} &
  \multicolumn{1}{c}{\textbf{5-shot}} \\ \midrule
\multicolumn{1}{r|}{$(i)$ ICNN in Support}                                     & ConvNet                     & \multicolumn{1}{c}{41.82} & \multicolumn{1}{c}{55.52} \\
\multicolumn{1}{r|}{$(ii)$ ICNN in Support + Query}                             & ConvNet                     & \multicolumn{1}{c}{42.81} & \multicolumn{1}{c}{55.93} \\
\multicolumn{1}{r|}{$(iii)$  CrossEntropy + ICNN in Support}                      & ConvNet                     & \multicolumn{1}{c}{49.28} & \multicolumn{1}{c}{68.33} \\
\multicolumn{1}{r|}{$(iv)$ CrossEntropy + ICNN in Support \& Query} &
  ConvNet &
  \multicolumn{1}{c}{\textbf{49.71}} &
  \multicolumn{1}{c}{\textbf{68.66}} \\
\multicolumn{1}{r|}{$(v)$ ICNN in Support \& Query(Prototypes)}                 & ConvNet                     & 41.81                     & 55.93                     \\
\multicolumn{1}{r|}{$(vi)$ CrossEntropy + ICNN in Support\&Query(Protos)} &
  ConvNet &
  \multicolumn{1}{c}{48.86} &
  \multicolumn{1}{c}{67.74} \\
\multicolumn{1}{r|}{$(vii)$ Full ICNN}                                           & ConvNet & 45.42                     & 65.86                     \\
\multicolumn{1}{r|}{$(viii)$ Cross Entropy + Full ICNN}                           & ConvNet                     & 48.62                     & 68.12                     \\ \midrule
\multicolumn{1}{r|}{$(i)$ ICNN in Support}                                     & ResNet-12                   & 56.07                     & 77.44                     \\
\multicolumn{1}{r|}{$(ii)$ ICNN in Support \& Query}                             & ResNet-12                   & 58.93                     & 78.42                     \\
\multicolumn{1}{r|}{$(iii)$ CrossEntropy + ICNN in Support}                      & ResNet-12                   & 60.55                     & 79.11                     \\
\multicolumn{1}{r|}{$(iv)$ CrossEntropy + ICNN in Support \& Query}             & ResNet-12                   & 60.50                     & 79.58                     \\
\multicolumn{1}{r|}{$(v)$ ICNN in Support + Query(Prototypes)}                 & ResNet-12                   & 59.07                     & 79.18                     \\
\multicolumn{1}{r|}{$(vi)$ CrossEntropy + ICNN in Support\&Query(Protos)} & ResNet-12                   & \textbf{60.79}            & \textbf{80.41}            \\
\multicolumn{1}{r|}{$(vii)$ Full ICNN}                                           & ResNet-12                   & 60.37                     & 78.79                     \\
\multicolumn{1}{r|}{$(viii)$ Cross Entropy + Full ICNN}                           & ResNet-12                   & 60.32                     & 79.30                     \\ \bottomrule
\end{tabular}
\caption{Design choices for the ICNN Loss function using a ConvNet and ResNet-12 as feature extractors and 5-way tasks on both 1-shot and 5-shot settings.}
\label{tab:icnn}
\end{table*}

\begin{table*}
\centering
%\resizebox{0.5\textwidth}{!}{
\begin{tabular}{@{}rccc@{}}
\toprule
\textbf{Model}                                                 & \multicolumn{1}{l}{\textbf{Backbone}} & \textbf{1-shot}           & \textbf{5-shot}           \\ \midrule
\multicolumn{1}{r|}{Proto-Triplet}                             & ConvNet                               & 48.85                     & 67.79                     \\
\multicolumn{1}{r|}{Cross Entropy + Proto-Triplet}             & ConvNet                               & 41.66                     & 66.09                     \\
\multicolumn{1}{r|}{Proto-Triplet + Full ICNN}                 & ConvNet                               & 46.00                     & 62.72                     \\
\multicolumn{1}{r|}{Cross-Entropy + Proto-Triplet + Full ICNN} & ConvNet                               & \textbf{49.82}            & \textbf{68.76}            \\ \midrule
\multicolumn{1}{r|}{Proto-Triplet}                             & ResNet-12                             & \multicolumn{1}{l}{60.87} & \multicolumn{1}{l}{78.78} \\
\multicolumn{1}{r|}{Cross Entropy + Proto-Triplet}             & ResNet-12                             & \multicolumn{1}{l}{60.09} & \multicolumn{1}{l}{79.33} \\
\multicolumn{1}{r|}{Proto-Triplet + Full ICNN}                 & ResNet-12                             & \multicolumn{1}{l}{59.58} & \multicolumn{1}{l}{78.53} \\
\multicolumn{1}{r|}{Cross-Entropy + Proto-Triplet + Full ICNN} & ResNet-12 & \multicolumn{1}{l}{\textbf{61.32}} & \multicolumn{1}{l}{\textbf{79.93}} \\ \bottomrule
\end{tabular}
%}
\caption{Design choices for the Proto-Triplet Loss function using a ConvNet and ResNet-12 as feature extractors and 5-way tasks on both 1-shot and 5-shot settings.}
\label{tab:prototriplet}
\end{table*}

\begin{figure*}[htp]
    \centering
    
    \begin{subfigure}[t]{0.98\textwidth}
        \centering
        \raisebox{-\height}{\includegraphics[width=0.7\textwidth]{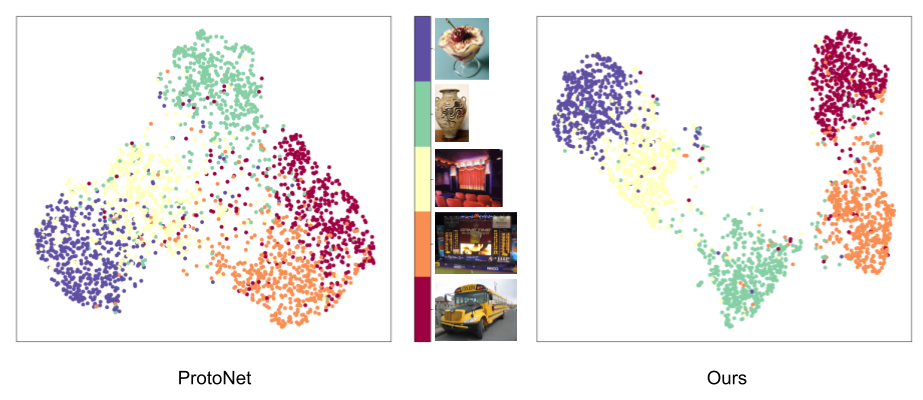}}
        \caption{MiniImagenet Dataset}
    \end{subfigure}
    
    \begin{subfigure}[t]{0.49\textwidth}
        \raisebox{-\height}{\includegraphics[width=\textwidth]{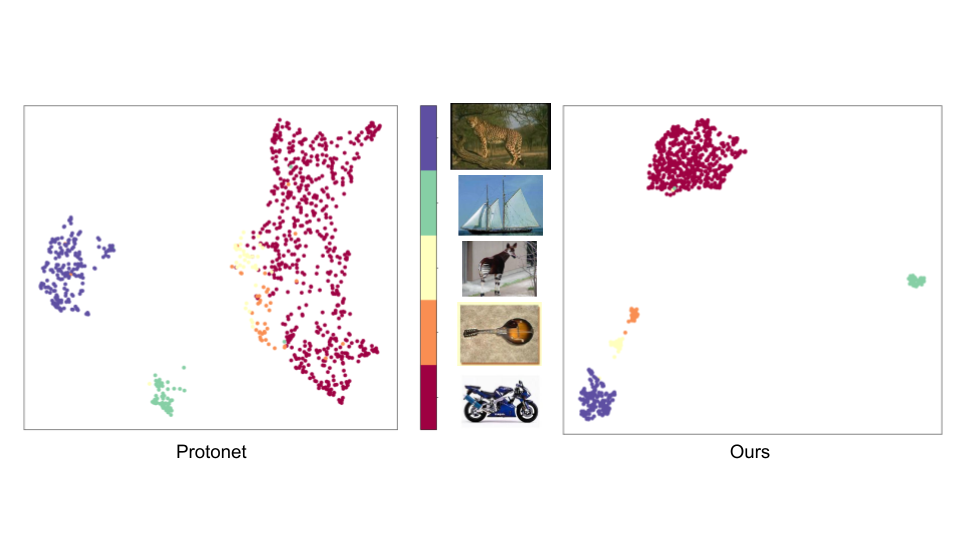}}
        \caption{Caltech Dataset}
    \end{subfigure}
    \hfill
    \begin{subfigure}[t]{0.49\textwidth}
        \raisebox{-\height}{\includegraphics[width=\textwidth]{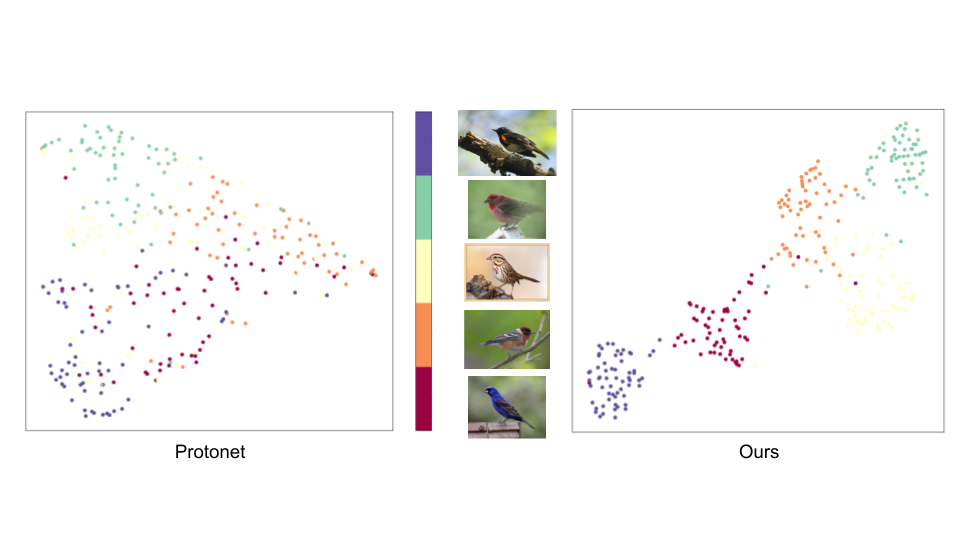}}
        \caption{CUB Dataset}
    \end{subfigure}
    
    \begin{subfigure}[t]{0.49\textwidth}
        \raisebox{-\height}{\includegraphics[width=\textwidth]{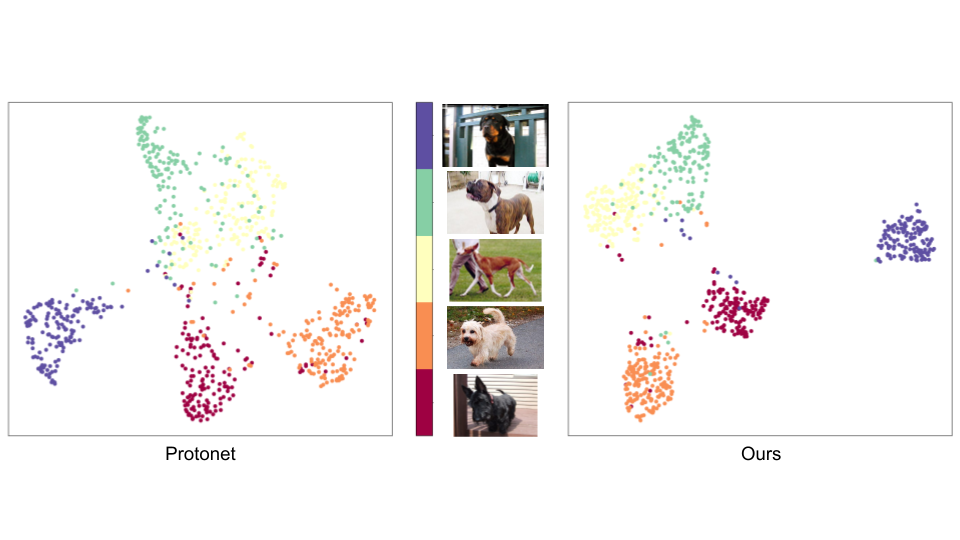}}
        \caption{Dogs Dataset}
    \end{subfigure}
    \hfill
    \begin{subfigure}[t]{0.49\textwidth}
        \raisebox{-\height}{\includegraphics[width=\textwidth]{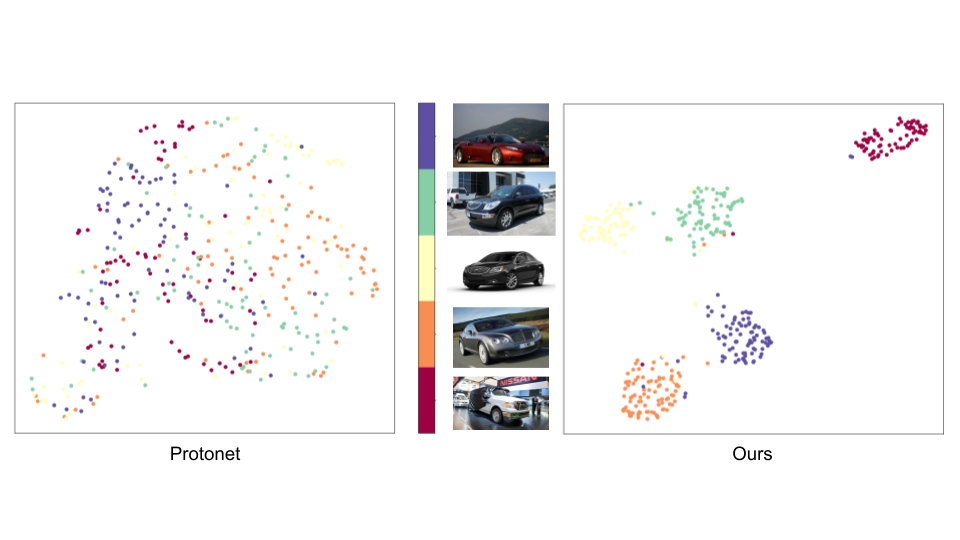}}
        \caption{Cars Dataset}
    \end{subfigure}
    
    \caption{Umap visualization of the improved feature embeddings obtained from the test set for each dataset. The feature embeddings were computed in a 5-way 5-shot setting by constructing 1,000 tasks from the test data. When creating the tasks for the feature visualization, we kept fixed 5 classes and randomly draw samples to construct the support and query sets}
    \label{fig:umaps}
\end{figure*}

% Please add the following required packages to your document preamble:
% \usepackage{booktabs}
% \usepackage{multirow}
% \usepackage{graphicx}
\begin{table*}[]
\centering
%\resizebox{0.5\textwidth}{!}{
\begin{tabular}{@{}ccccc@{}}
\toprule
\multicolumn{2}{c}{Dataset}                 & ProtoNet     & K-tuplets    & Ours                  \\ \midrule
\multirow{2}{*}{CUB-200}       & 5way-1shot & 39.39 ± 0.68 & 40.16 ± 0.68 & \textbf{67.56 ± 0.65} \\
                               & 5way-5shot & 56.06 ± 0.66 & 56.96 ± 0.65 & \textbf{83.62 ± 0.73} \\ \hline
\multirow{2}{*}{Caltech-101}   & 5way-1shot & 53.28 ± 0.78 & 61.00 ± 0.81 & \textbf{69.79 ± 0.57} \\
                               & 5way-5shot & 72.96 ± 0.67 & 75.60 ± 0.66 & \textbf{87.85 ± 0.96} \\ \hline
\multirow{2}{*}{Stanford Dogs} & 5way-1shot & 33.11 ± 0.64 & 37.33 ± 0.65 & \textbf{58.29 ± 0.28} \\
                               & 5way-5shot & 45.94 ± 0.65 & 49.97 ± 0.66 & \textbf{78.08 ± 0.21} \\ \hline
\multirow{2}{*}{Stanford Cars} & 5way-1shot & 29.10 ± 0.75 & 31.20 ± 0.58 & \textbf{72.76 ± 0.86} \\
                               & 5way-5shot & 38.12 ± 0.60 & 47.10 ± 0.62 & \textbf{87.67 ± 0.69} \\ \hline
\end{tabular}%
%}
\caption{Average few-shot classification accuracies (\%) on other datasets. All the experiments are conducted with the same network for fair comparison.}
\label{tab:generalization}
\end{table*} 

\begin{figure*}[htp]
    \centering
    
    \begin{subfigure}[t]{0.49\textwidth}
        \raisebox{-\height}{\includegraphics[width=\textwidth]{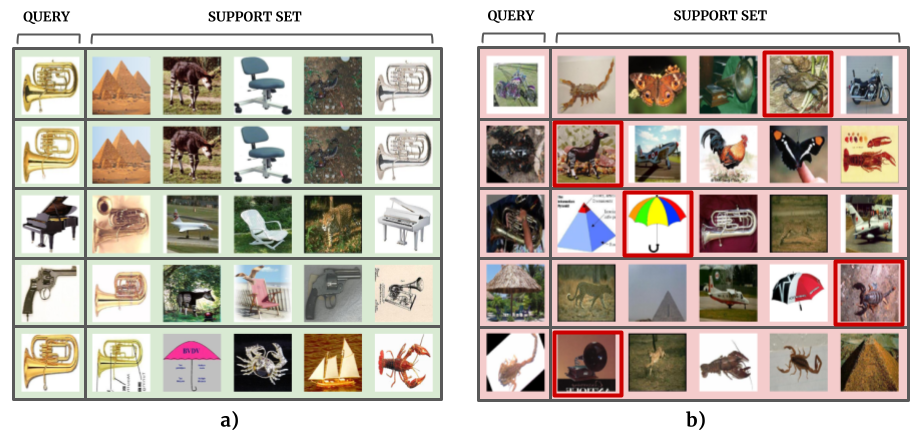}}
        \caption{Caltech Dataset}
    \end{subfigure}
    \hfill
    \begin{subfigure}[t]{0.49\textwidth}
        \raisebox{-\height}{\includegraphics[width=\textwidth]{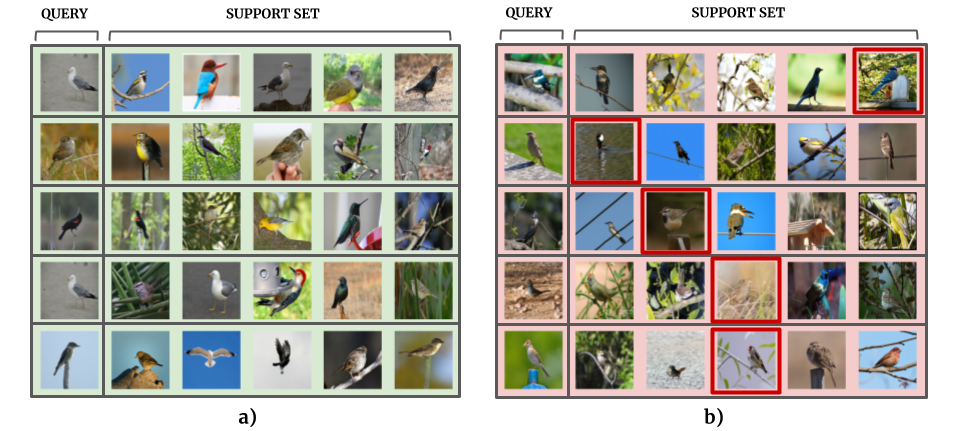}}
        \caption{CUB Dataset}
    \end{subfigure}
    
    \begin{subfigure}[t]{0.49\textwidth}
        \raisebox{-\height}{\includegraphics[width=\textwidth]{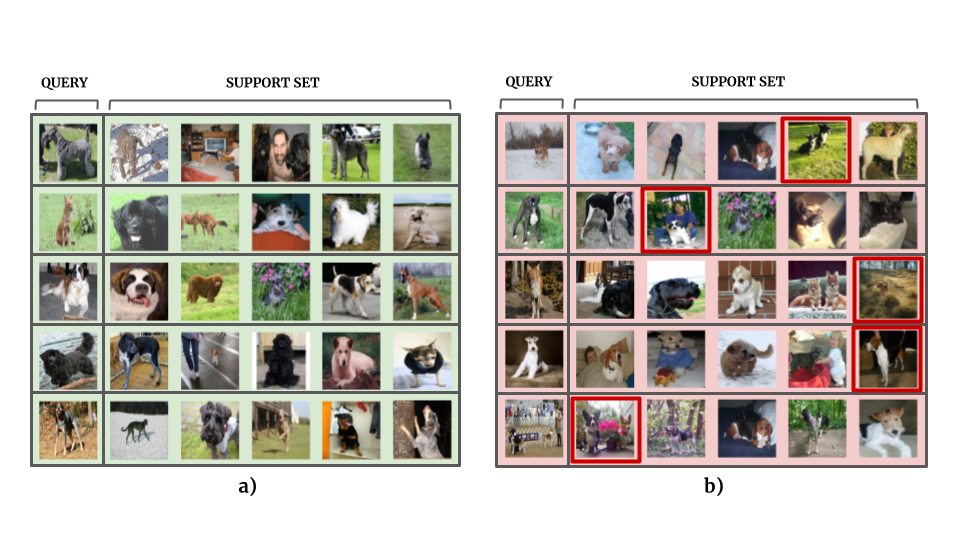}}
        \caption{Dogs Dataset}
    \end{subfigure}
    \hfill
    \begin{subfigure}[t]{0.49\textwidth}
        \raisebox{-\height}{\includegraphics[width=\textwidth]{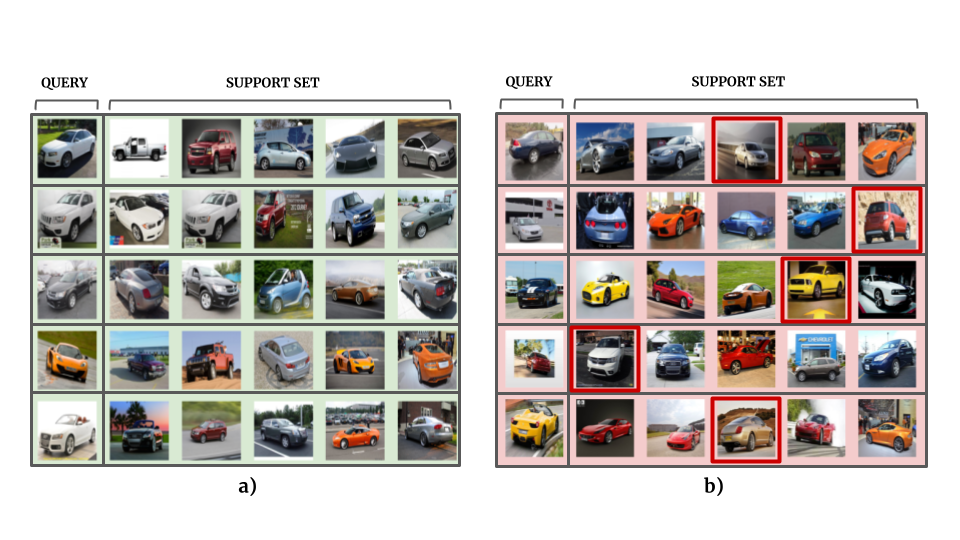}}
        \caption{Cars Dataset}
    \end{subfigure}
    
    \caption{Examples of tasks on the testing phase for different datasets. For each dataset, we report 5 tasks where the model classified correctly (green side) and 5 tasks where it failed to correctly classify and where it was very far from the correct result. }
    \label{fig:examples}
    
\end{figure*}

The results for these different ablation studies for the ICNN loss are reported in Table \ref{tab:icnn}. Our baseline is the simplest form of using the ICNN loss, by applying the score only to the support data. This gives us the worst results from the performed experiments, and are not competitive with state-of-the-art models. Then, we test our loss function by applying the score to different sets of data. For $(ii)$ and $(iv)$, we carried out tests using ICNN on support data and query data separately, with the difference that $(iv)$ uses the additional loss of the cross-entropy. When using a ConvNet, we obtained the best results by using the cross entropy and ICNN in support and query (iv), improving the baseline performance by around 8\%. For $(v)$ and $(vi)$ we modify the ICNN score on the query by comparing distances from each query to the prototypes instead of the comparing the distances from each query to the support data. When using the ResNet-12 as the feature extractor, this combination $(vi)$ gave us the best results, improving the baseline by around 4\% for the 1-shot setting and 3\% for the 5-shot setting. The last experiments, $(vii)$ and $(viii)$, apply the ICNN score to the combined set of support data and query data. These last experiments didn't lead us to the best results, but they are also quite competitive.

The results for the ablation studies for the proto-triplet loss are reported in Table \ref{tab:generalization}. We evaluate the combination of three loss functions: the cross-entropy, proto-triplet and the full version of the ICNN loss. First we test the proto-triplet loss alone to optimize the network, and obtain already a good accuracy performance compared with state-of-the-art results. Then, we continue by mixing the loss functions and we can observe that combining cross entropy with proto-triplet and ICNN loss with proto-triplet we get a worse performance than the obtained using only the proto-triplet loss. The last design choice is the combination of cross entropy, ICNN loss and proto-triplet loss, which is the one with the best accuracy performance by improving the result by 1\% using both feature extractors.

%\begin{figure}
%\centering
%\includegraphics[width=0.5\textwidth]{}
%\caption{} \label{fig:loss_miniimagenet}
%\end{figure}

\subsection{Results of generalization on other datasets}

In order to measure the capabilities of generalization of the approach presented in this work, we evaluated the proposed losses and models on different datasets. Each dataset has distinct characteristics such as the number of classes, and the number of samples per class. The selected datasets are the following: CUB-200, Caltech-101 \cite{griffin2007caltech}, Stanford dogs, and Stanford cars \cite{stanford_cars}.

In order to compare our results with the state-of-the-art, we evaluated two backbones: ProtoNet and K-tuplets. For each backbone, 
two FSL configurations (5way-1shot, and 5way-5shot) were evaluated for each of the datasets (see Table \ref{tab:generalization}).

For visualization purposes and a qualitative evaluation of the results, we compared our model with those of ProtoNet using UMAP (Fig. \ref{fig:umaps}). 
In order to reinforce the quantitative (Fig. \ref{fig:umaps}) and qualitative (Table \ref{tab:generalization}) results, we show in Fig. \ref{fig:test}  examples of tasks on the testing phase for different datasets. For each dataset, we report 5 tasks where the model classified correctly (green side) and 5 tasks where it failed to correctly classify and where it was very far from the correct result.

\textbf{CUB dataset:} Our results, evaluated in CUB dataset have yielded an accuracy of 83.62\% for 5way-5shot congiguration (up to 26\% higher than K-tuplets and ProtoNet). Similarly, in 5way-1shot configuration, our approach surpasses the results of both models up to 27\% accuracy. In order to illustrate qualitatively the results in Fig. \ref{fig:umaps}c, we can see the effect of our model with respect to ProtoNet, which produces a sparse and overlapping point cloud. On the other hand, in our model, it can be observed that the groups of points for each class are well separated, although the points are slightly more dispersed.
The CUB dataset is the most difficult group of images to classify in our tests. This may be due to high intra-class similarities in the samples (birds with similar characteristics, Fig. \ref{fig:examples}c).  However, although it is difficult to classify this type of images, our approach has successfully reduced the intra-class distance and maximize the inter-class distances.

\textbf{Caltech dataset:} For the Caltech dataset, the results obtained for ProtoNet and K-tuplets using a 5way-1shot configuration have lower performances than our model. Although in a 5way-5shot configuration, the results for ProtoNet and K-tuplets are relatokively good (achieving performances close to 70\%). However,  our model obtains up to 87.85\%, surpassing the two compared models by up to 12\%. In the UMAP visualization, the results for our model are outstanding (Fig. \ref{fig:umaps}b): the point clouds are well separated by maximizing the inter-class distance, and it compacts points of the same classes. This allows us to observe in Fig. 6b compact and separated groups of points, which is desirable in classification problems.  
In Fig. \ref{fig:examples}a, we can observe that during the Caltech test our model is efficient when there is a strong similarity between the true class of the support set and the query set (green side). However, the model tends to reduce its classification capability when natural images are presented, i.e., images with non-uniform background or rotation effects (red side).

\textbf{Dogs dataset:} In the Dogs dataset the 5way-1shot configuration (58.29\%) shows an improvement over ProtoNet and K-tuplets, while it is only surpassed by the 5way-5shot configuration (78.08\%). In the UMAP visualization (Fig. \ref{fig:umaps}d) we can see how the points are better grouped when compared to the results produced by ProtoNet, which exhibit more dispersion. Although ProtoNet shows a good separation between classes, our model improves the inter-class distance.
For the Dogs dataset test (Fig. \ref{fig:examples}c), we can observe that even though this group of images contains non-homogeneous backgrounds, the model is able to discriminate between the different classes (green side). However, it can be observed that it is sensitive to objects that do not represent the class (red side), such as the presence of people or other objects in the test image.

\textbf{Cars dataset:} The Cars dataset has a larger scatter in the UMAP feature space for ProtoNet. As can be seen in Fig. \ref{fig:umaps}e (ProtoNet), the intra-class distance is relatively large, whereas inter-class distances are very small. The results achieved with ProtoNet go as high as 29.10\% and 38.14\% using 5-way 1-shot, and 5-way 5-shot, respectively. 
On the other hand, the results of our model are promising. Quantitatively, results of 72.76\% and 87.67\% are achieved for 5-way 1-shot, and 5-way 5-shot, respectively. Qualitatively, the feature space is considerably improved. In Fig. \ref{fig:umaps}e, tight clusters are observed for all individual classes, while the extra-class distance increases considerably.
Finally in the Cars dataset (Fig. \ref{fig:examples}d)we can observe a similar issue as with the CUB dataset. Despite having classes with very similar characteristics, this problem can be solved with models such as the one we propose in this paper. Maximizing the distance that exists between classes and reducing the intra-class distance, which allows us to discriminate even though the test images have highly similar distributions.

Overall, for each of the datasets , we have observed that our proposal outperforms both ProtoNet and K-tuplets using the two tested backbones, both quantitatively and qualitatively. The models trained using 5way-5shot present the best results for each of the datasets. This is because our models have more samples to learn during the FSL training process, yielding better representations. However, the 5way-1shot results are not entirely negligible. On the contrary, they extract features relevant for classification and are close to those obtained in 5way-5shot without the need to use a larger shot. In applications with few samples, where the number of images is limited, the use of techniques such as 5way-1shot for FSL could provide excellent results close to 5way-5shot. 

%\subsection{Features Visualization}

%Figure \ref{fig:visualization} shows the feature space of a 5-way setting in the test set visualized using Umap. The feature embeddings were computed in a 5-way 5-shot setting by constructing 1,000 tasks from the test data. When creating the tasks for the feature visualization, we kept fix the 5 classes sampled and randomly draw samples to construct the support and query sets. As it can be observed, our method obtain more discriminating features for classes that weren't seen on the training phase.

\section{Conclusion}

In this paper, we proposed two different loss functions to train an embedding network for few-shot image classification. These loss functions are based on the concepts of intra-class and inter-class distance, and have as the main objective to pull together instances of the same class and push further away the instances of different classes. The proposed model improves the accuracy performance when compared with baseline models which make use of metric learning approaches to solve the few-shot classification problem on the public benchmark dataset. The proposed model also obtains competitive results when compared with more recent methods which make use of a more robust embedding network, as we improve the accuracy for the 5-way 5-shot setting. Our current framework can be extended in several ways. For instance, we could make the hyper-parameters of the ICNN algorithm to be learnable instead of making them of a fixed value. Another direction is to test these loss functions with different metric meta-learning methods to see if they are allowing the network to learn better feature representations for few-shot tasks.

%In figure \ref{fig:5way1shot_ICNN} and \ref{fig:5way5shot_ICNN}, we can visualize how some few-shot tasks scenarios would be rated using the ICNN score.

%\begin{figure}
%\centering
%\includegraphics[width=1.0\textwidth]{}
%\caption{ICNN score behaviour on different 5-way 1-shot scenarios.} \label{fig:5way1shot_ICNN}
%\end{figure}

%\begin{figure}
%\centering
%\includegraphics[width=1.0\textwidth]{}
%\caption{ICNN score behaviour on different 5-way 5-shot scenarios.} \label{fig:5way5shot_ICNN}
%\end{figure}

%\section*{References}
% \bibliography{mybibfile}

\section*{Acknowledgments}
The authors wish to thank the AI Hub and the Centro de Innovación de Internet de las Cosas at Tecnológico de Monterrey for their support for carrying the experiments in this paper in their NVIDIA's DGX computer.

This work has been supported by Azure Sponsorship credits granted by Microsoft's AI for Good Research Lab through the AI for Health program.

%% Loading bibliography style file
%\bibliographystyle{model1-num-names}
\bibliographystyle{cas-model2-names}

%\section*{References}
\bibliography{mybibfile}

\appendix
\section{Appendix}
\subsection{Proof Proposition 1}
\label{Proof:Proposition1}
\noindent
In order to prove the previous hypothesis (Propositions \ref{Proposition1}), the loss function ICNN (\ref{ICNNLOSS:def}) for $X$\footnote{In the case of $\left| X \right| = 2$, we have that the second term becomes the soft approximation of the rectifier relu function.}:
\begin{small}
\begin{equation}
ICNNLoss\left(X\right)=-\log\left[\frac{1}{\left|X\right|}\sum_{x_{i}\in X}\lambda\left(x_{i}\right)^{\frac{1}{p}}\omega\left(x_{i}\right)^{\frac{1}{q}}\gamma\left(x_{i}\right)^{\frac{1}{r}}\right] \nonumber
\end{equation}
\end{small}

Thus, without losing generality at each step of the gradient descent, there are lower bounds for $\omega$ and $\gamma$,   $\sigma_{\omega} \leq \omega\left(x_{i}\right)^{\frac{1}{q}}$ and $\sigma_{\gamma} \leq \gamma\left(x_{i}\right)^{\frac{1}{r}}$. Thus, we can obtain the following inequality,
\begin{small}
\begin{align}
    - ICNNLoss\left(X\right)  \geq  &\log\left[\tau\sum_{x_{i}\in X}\lambda\left(x_{i}\right)^{\frac{1}{p}}\right]  \nonumber
\end{align}
\end{small}
with 
\begin{small}
\begin{equation}
    \tau =	\frac{\sigma_{\omega}\times\sigma_{\gamma}}{\left|X\right|}, \nonumber
\end{equation}
\end{small}
we have that
\begin{small}
\begin{equation*}
    ICNNLoss\left(X\right) \leq -\log\left[\sum_{x_{i}\in X} \tau \lambda\left(x_{i}\right)^{\frac{1}{p}}\right]
\end{equation*}
\end{small}
Thus, the minimization of $-\log\left[\sum_{x_{i}\in X}\lambda\left(x_{i}\right)^{\frac{1}{p}}\right]$ implies the maximization of $\sum_{x_{i}\in X}\lambda\left(x_{i}\right)^{\frac{1}{p}}$. 
Now, it is known that under the gradient descent, and sample $\theta_{t+1}$ generated from $\theta_t$, the loss function $L$:
\begin{small}
\begin{equation*}
    L\left( \theta_{t+1} \right) \leq L\left( \theta_{t} \right).
\end{equation*}
\end{small}
Then, at each time $t$, $-\log\left[\sum_{x_{i}\in X}\lambda\left(x_{i} \vert \theta_t \right)^{\frac{1}{p}}\right]$ is smaller than the previous iteration. Therefore, intra-class is minimized and inter-class is maximized QED.

\subsection{Proof of Proposition 2}
\label{Proof:Proposition2}
(Propositions \ref{Proposition2})
Given the fact when $ICNNLoss\left(X\right)$ is minimized, all building elements are maximized. In particular, in the case of 
\begin{small}
\begin{equation*}
        \omega \left( x_i \right) = \alpha - \left[ Var\left( \lambda_{\tilde{x}} \left( x_i \right)  \right) +  Var\left( \lambda_{x}\left( x_i \right)  \right) \right],
\end{equation*}
\end{small}
the variances $Var\left( \lambda_{\tilde{x}} \left( x_i \right)  \right)$ and $Var\left( \lambda_{x}\left( x_i \right)  \right)$ are bounded by the $\alpha = k_1^2+k_2^2$ term. Thus, using a similar setup than the previous proof (Proposition \ref{Proof:Proposition1}) we can say that at each step $t$ of the gradient descent  the $Var\left( \lambda_{\tilde{x}} \left( x_i \right)  \right) + Var\left( \lambda_{x}\left( x_i \right)  \right) $ is minimized thus regularizing the  intra-class is minimized and inter-class distances QED.

\end{document}